\DeclareMathOperator*{\argmin}{arg\,min}
\DeclareMathOperator*{\expectation}{\mathbb{E}}
\newtheorem{theorem}{Theorem}
\title{Single-Model Uncertainties for Deep Learning}
\author{%
Natasa Tagasovska\\
Department of Information Systems\\
HEC Lausanne, Switzerland\\
\texttt{natasa.tagasovska@unil.ch}
\And
 David Lopez-Paz \\
 Facebook AI Research \\
 Paris, France\\
 \texttt{dlp@fb.com}
}
\begin{document}

\maketitle

\begin{abstract}
We provide single-model estimates of aleatoric and epistemic uncertainty for deep neural networks.
To estimate aleatoric uncertainty, we propose Simultaneous Quantile Regression (SQR), a loss function to learn all the conditional quantiles of a given target variable.
These quantiles can be used to compute well-calibrated prediction intervals.
To estimate epistemic uncertainty, we propose Orthonormal Certificates (OCs), a collection of diverse non-constant functions that map all training samples to zero.
These certificates map out-of-distribution examples to non-zero values, signaling epistemic uncertainty.
Our uncertainty estimators are computationally attractive, as they do not require ensembling or retraining deep models, and achieve competitive performance.
\end{abstract}

\section{Introduction}
\label{sec:unc_prediction}

Deep learning permeates our lives, with prospects to drive our cars and decide on our medical treatments.
These ambitions will not materialize if deep learning models remain unable to assess their confidence when performing under diverse situations.
Being aware of uncertainty in prediction is crucial in multiple scenarios.
First, uncertainty plays a central role on deciding when to abstain from prediction.
Abstention is a reasonable strategy to deal with anomalies \citep{chandola2009anomaly}, outliers \citep{hodge2004survey}, out-of-distribution examples \citep{shafaei2018does}, detect and defend against adversaries \citep{intriguing}, or delegate high-risk predictions to humans \citep{cortes2016learning, geifman2017selective, chen2018confidence}.
Deep classifiers that ``do not know what they know'' may confidently assign one of the training categories to objects that they have never seen.
Second, uncertainty is the backbone of active learning \citep{settles2014active}, the problem of deciding what examples should humans annotate to maximally improve the performance of a model.
Third, uncertainty estimation is important when analyzing noise structure, such as in causal discovery \citep{dlp} and in the estimation of predictive intervals.
Fourth, uncertainty quantification is one step towards model interpretability \citep{alvarez2017causal}.

\begin{figure}[htb]
\begin{center}
    \centering
    \includegraphics[width=\textwidth]{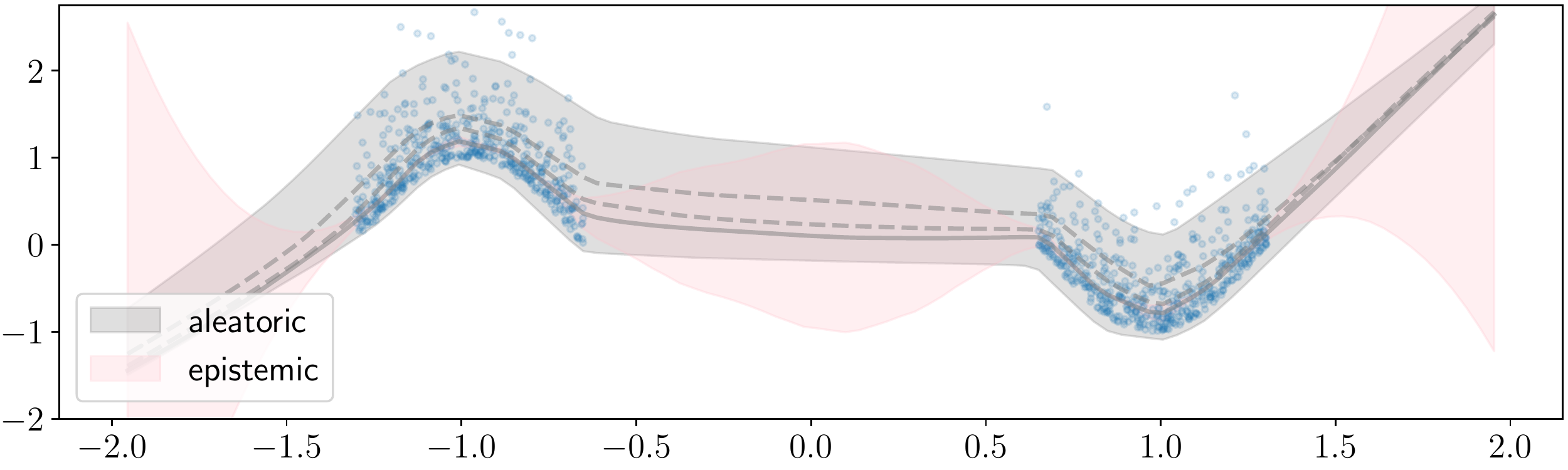}
    \caption{Training data with non-Gaussian noise (blue dots), predicted median (solid line), 65\% and 80\% quantiles (dashed lines), aleatoric uncertainty or 95\% prediction interval (gray shade, estimated by SQR Sec.~\ref{sec:aleatoric}), and epistemic uncertainty (pink shade, estimated by orthonormal certificates Sec.~\ref{sec:epistemic}).}
    \label{fig:motivating_example}
\end{center}
\end{figure}

Being a wide-reaching concept, most taxonomies consider three sources of uncertainty: approximation, aleatoric, and epistemic uncertainties \citep{der2009aleatory}.
First, approximation uncertainty describes the errors made by simplistic models unable to fit complex data (e.g., the error made by a linear model fitting a sinusoidal curve).
Since the sequel focuses on deep neural networks, which are known to be universal approximators \citep{cybenko1989approximation}, we assume that the approximation uncertainty is negligible and omit its analysis.
Second, aleatoric uncertainty (from the Greek word \emph{alea}, meaning ``rolling a dice'') accounts for the {stochasticity of the data}.
Aleatoric uncertainty describes the variance of the conditional distribution of our target variable given our features.
This type of uncertainty arises due to hidden variables or measurement errors, and cannot be reduced by collecting more data under the same experimental conditions.
Third, epistemic uncertainty (from the Greek word \emph{episteme}, meaning ``knowledge'') describes the errors associated to the {lack of experience} of our model at certain regions of the feature space.
Therefore, epistemic uncertainty is inversely proportional to the density of training examples, and could be reduced by collecting data in those low density regions.
Figure~\ref{fig:motivating_example} depicts our aleatoric uncertainty (gray shade) and epistemic uncertainty (pink shade) estimates for a simple one-dimensional regression example.

In general terms, accurate estimation of aleatoric and epistemic uncertainty would allow machine learning models to know better about their limits, acknowledge doubtful predictions, and signal test instances that do not resemble anything seen during their training regime.
As argued by \citet{begoli2019need}, uncertainty quantification is a problem of paramount importance when deploying machine learning models in sensitive domains such as information security \citep{smith2011information}, engineering \citep{wen2003uncertainty}, transportation \citep{zhu2017deep}, and medicine \citep{begoli2019need}, to name a few.

Despite its importance, uncertainty quantification is a largely unsolved problem.
Prior literature on uncertainty estimation for deep neural networks is dominated by Bayesian methods \citep{hernandez2015probabilistic, blundell2015weight,gal2016dropout, kendall2017uncertainties, khan2018fast, teye2018bayesian}, implemented in approximate ways to circumvent their computational intractability. 
Frequentist approaches rely on expensive ensemble models \citep{breiman1996bagging}, explored only recently \citep{osband2016deep, lakshminarayanan2017simple, Pearce2018}, are often unable to estimate asymmetric, multimodal, heteroskedastic predictive intervals. 

The ambition of this paper is to provide the community with trust-worthy, simple, scalable, single-model estimators of aleatoric and epistemic uncertainty \emph{in deep learning}.
To this end, we contribute:
\begin{itemize}
    \item \emph{Simultaneous Quantile Regression} (SQR) to estimate aleatoric uncertainty (Section~\ref{sec:aleatoric}),
    \item \emph{Orthonormal Certificates} (OCs) to estimate epistemic uncertainty (Section~\ref{sec:epistemic}),
    \item experiments showing the competitive performance of these estimators (Section~\ref{sec:experiments}),
    \item and an unified literature review on uncertainty estimation (Section~~\ref{sec:related}).
\end{itemize}

We start our exposition by exploring the estimation of aleatoric uncertainty, that is, the estimation of uncertainty related to the conditional distribution of the target variable given the feature variable.
 
\section{Simultaneous Quantile Regression for Aleatoric Uncertainty}
\label{sec:aleatoric}

Let $F(y) = P(Y \leq y)$ be the strictly monotone cumulative distribution function of a target variable $Y$ taking real values $y$.
Consequently, let $F^{-1}(\tau) = \inf \left\{ y : F(y) \geq \tau \right\}$ denote the quantile distribution function of the same variable $Y$, for all quantile levels $0 \leq \tau \leq 1$.
The goal of {quantile regression} is to estimate a given quantile level $\tau$ of the target variable $Y$, when conditioned on the values $x$ taken by a feature variable $X$.
That is, we are interested in building a model $\hat{y} = \hat{f}_\tau(x)$ approximating the conditional quantile distribution function $y = F^{-1}(\tau | X=x)$.
One strategy to estimate such models is to minimize the {pinball loss} \citep{fox1964admissibility, koenker1978regression, Koenker2005, ferguson2014mathematical}:
\begin{equation*}
	\ell_\tau(y, \hat{y}) = 
	\left\{\begin{array}{lr}
		\tau (y - \hat{y}) & \text{if } y - \hat{y} \geq 0,\\
		(1 - \tau) (\hat{y} - y) & \text{else}.
	\end{array}\right.
\end{equation*}
To see this, write
\begin{align*}
    \mathbb{E} \left[\ell_\tau(y, \hat{y})  \right] = (\tau - 1) \int_{-\infty}^{\hat{y}} (y - \hat{y}) \mathrm{d} F(y) + \tau \int_{\hat{y}}^\infty (y - \hat{y}) \mathrm{d} F(y),
\end{align*}
where we omit the conditioning on $X=x$ for clarity.
Differentiating the previous expression: 
\begin{align*}
   \frac{\partial \mathbb{E} \left[\ell_\tau(y, \hat{y}) \right]}{\partial \hat{y}}   &= (1 - \tau) \int_{-\infty}^{\hat{y}} \mathrm{d} F(y) - \tau 
\int_{\hat{y}}^\infty \mathrm{d} F(y)
    = (1 - \tau) F(\hat{y}) - \tau (1 - F(\hat{y})) = F(\hat{y}) - \tau.
\end{align*}
Setting the prev. to zero reveals the loss minima at the quantile $\hat{y} = F^{-1}(\tau)$.
The absolute loss corresponds to $\tau = \frac{1}{2}$, associated to the estimation of the conditional median.

Armed with the pinball loss, we collect a dataset of identically and independently distributed (iid) feature-target pairs $(x_1, y_1), \ldots, (x_n, y_n)$ drawn from some unknown probability distribution $P(X, Y)$.
Then, we may estimate the conditional quantile distribution of $Y$ given $X$ {at a single quantile level $\tau$} as the empirical risk minimizer 
$\hat{f}_\tau \in \argmin_{f} \frac{1}{n} \sum_{i=1}^n \ell_\tau(f(x_i), y_i)$.
Instead, we propose to estimate {all the quantile levels simultaneously} by solving:
\begin{equation}
\hat{f} \in \argmin_{f} \frac{1}{n} \sum_{i=1}^n \expectation_{\tau \sim U[0,1]} \left[  \ell_\tau(f(x_i, \tau), y_i) \right].
    \label{eq:random_pinball}
\end{equation}
We call this model a \emph{Simultaneous Quantile Regression} (SQR).
In practice, we minimize this expression using stochastic gradient descent, sampling fresh random quantile levels $\tau \sim \mathcal{U}[0, 1]$ for each training point and mini-batch during training.
The resulting function $\hat{f}(x, \tau)$ can be used to compute any quantile of the conditional variable $Y|X=x$.
Then, {our estimate of aleatoric uncertainty} is the $1-\alpha$ prediction interval ($\alpha$ - significance level) around the median:
\begin{equation}
    u_a(x^\star) := \hat{f}(x^\star, 1 - \alpha / 2) - \hat{f}(x^\star, \alpha / 2).
    \label{eq:aleatoric}
\end{equation}
SQR estimates the entire conditional distribution of the target variable.
One SQR model estimates all non-linear quantiles jointly, and does not rely on ensembling models or predictions.
This reduces training time, evaluation time, and storage requirements.
The pinball loss $\ell_\tau$ estimates the $\tau$-th quantile consistently \citep{steinwart2011estimating}, providing SQR with strong theoretical grounding.
In contrast to prior work in uncertainty estimation for deep learning \citep{gal2016dropout, lakshminarayanan2017simple, Pearce2018}, SQR can model non-Gaussian, skewed, asymmetric, multimodal, and heteroskedastic aleatoric noise in data.
Figure~\ref{fig:joint_estimation} shows the benefit of estimating all the quantiles using a joint model in a noise example where $y = \cos(10 \cdot x_1) + \mathcal{N}(0, \frac{1}{3})$ and $x \sim \mathcal{N}(0, I_{10})$.
In particular, estimating the quantiles jointly greatly alleviates the undesired phenomena of \emph{crossing quantiles} \citep{takeuchi2006nonparametric}.
SQR can be employed in any (pre-trained or not) neural network without sacrificing performance, as it can be implemented as an additional output layer.
Finally, Appendix~\ref{app:classification} shows one example of using SQR for binary classification.
We leave for future research how to use SQR for multivariate-output problems. 

\begin{figure}[htb]
\begin{center}
    \includegraphics[width=\textwidth]{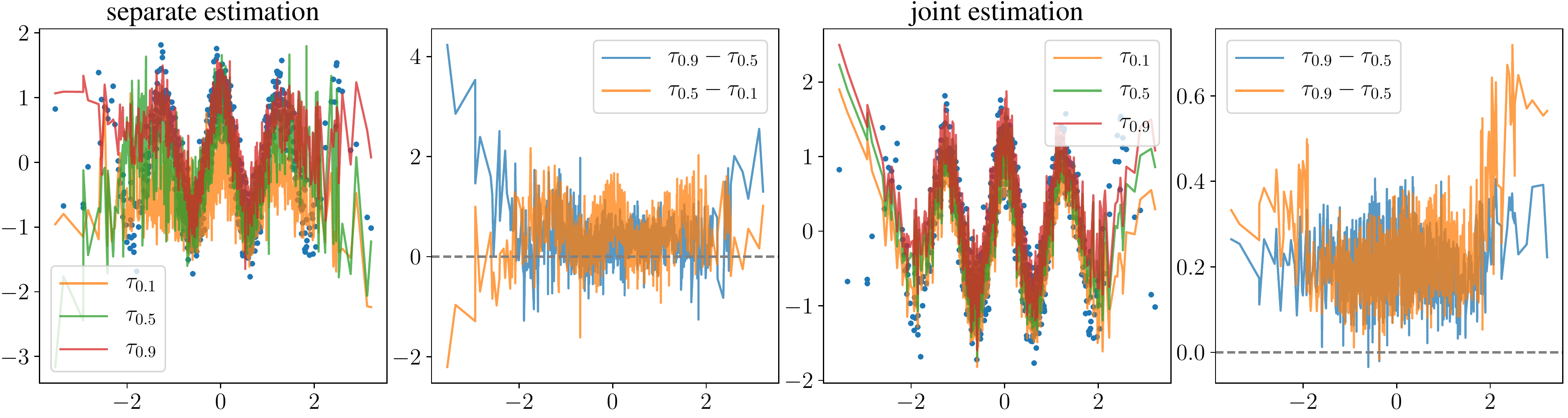}
    \caption{Estimating all the quantiles jointly greatly alleviates the problem of \emph{crossing quantiles}.}
    \label{fig:joint_estimation}
\end{center}
\end{figure}

\section{Orthonormal Certificates for Epistemic Uncertainty}
\label{sec:epistemic}

To study the problem of epistemic uncertainty estimation, consider a thought experiment in binary classification, discriminating between a positive distribution $P$ and a negative distribution $Q$.
Construct the optimal binary classifier $c$, mapping samples from $P$ to zero, and mapping samples from $Q$ to one.
The classifier $c$ is determined by the positions of $P$ and $Q$.
Thus, if we consider a second binary classification problem between the same positive distribution $P$ and a different negative distribution $Q'$, the new optimal classifier $c'$ may differ significantly from $c$.
However, both classifiers $c$ and $c'$ have one treat in common: they map samples from the positive distribution $P$ to zero.

This thought experiment illustrates the difficulty of estimating epistemic uncertainty when learning from a positive distribution $P$ without any reference to a negative, ``out-of-domain'' distribution $Q$.
That is, we are interested not only in one binary classifier mapping samples from $P$ to zero, but in the infinite collection of such classifiers.
Considering the infinite collection of classifiers mapping samples from the positive distribution $P$ to zero, the upper tail of their class-probabilities should depart significantly from zero at samples \emph{not} from $P$, signaling high epistemic uncertainty.
This intuition motivates our epistemic uncertainty estimate, \emph{orthonormal certificates}.

To describe the construction of the \emph{certificates}, consider a deep model $y = f(\phi(x))$ trained on input-target samples drawn from the joint distribution $P(X, Y)$.
Here, $\phi$ is a deep featurizer extracting high-level representations from data, and $f$ is a shallow classifier grouping such representations into classes. 
Construct the dataset of high-level representations of training examples, denoted by $\Phi = \{\phi(x_i)\}_{i=1}^n$.
Second, train a collection of \emph{certificates} $C = (C_1, \ldots, C_k)$.
Each training certificate $C_j$ is a simple neural network trained to map the dataset $\Phi$ to zero, by minimizing a loss function $\ell_c$.
Finally, we define {our estimate of epistemic uncertainty} as:
\begin{equation}
    u_e(x^\star) := \| C^\top \phi(x^\star) \|^2.
    \label{eq:epistemic}
\end{equation}
Due to the smoothness of $\phi$, the average certificate~\eqref{eq:epistemic} should evaluate to zero near the training distribution.
Conversely, for inputs distinct from those appearing in the training distribution, \eqref{eq:epistemic} should depart from zero and signal high epistemic uncertainty.
A threshold to declare a new input ``out-of-distribution'' can be obtained as a high percentile of \eqref{eq:epistemic} across in-domain data. 

When using the mean squared error loss for $\ell_c$, certificates can be seen as a generalization of distance-based estimators of epistemic uncertainty:
\begin{align*}
    u_d(x^\star) &= \min_{i=1, \ldots, n} \| \phi(x_i) - \phi(x^\star) \|^2  = \max_{i=1, \ldots, n} 2 \phi(x_i)^\top \phi(x^\star) - \|\phi(x_i)\|,
\end{align*}
which is a set of $n$ linear certificates $C_i^\top \phi(x^\star) = a_i^\top \phi(x^\star) + b_i$ with coefficients fixed by the feature representation of each training example $x_i$.
We implement $k$ certificates on top of a $h$-dimensional representation as a single $h \times k$ linear layer, trained to predict the $k$-vector ``$0$'' under some loss $\ell_c$.
Since we want diverse, non-constant (at zero) certificates, we impose an {orthonormality} constraint between certificates.
Orthonormality enforces a diverse set of Lipschitz-1 certificates.
Then, we construct our \emph{Orthonormal Certificates} (OCs) as:
\begin{equation}
    \hat{C} \in \argmin_{C \in \mathbb{R}^{h\times k}} \frac{1}{n} \sum_{i=1}^n \ell_c(C^\top\phi(x_i), 0) + \lambda \cdot \| C^\top C - I_k \|.
    \label{eq:regularizer}
\end{equation}
The use of non-linear certificates is also possible.
In this case, the Lipschitz condition for each certificate can be controlled using a gradient penalty \citep{gulrajani2017improved}, and their diversity can be enforced by measuring explicitly the variance of predictions around zero. 

Orthonormal certificates are applicable to both regression and classification problems. 
The choice of the loss function $\ell_c$ depends on the particular learning problem under study.
As a rule of thumb we suggest training the certificates with the \emph{same loss function used in the learning task}.
When using the mean-squared error, linear orthonormal certificates seek the directions in the data with the least amount of variance.
This leads to an interesting interpretation in terms of Principal Component Analysis (PCA). 
In particular, linear orthonormal certificates minimized by mean squared error estimate the null-space of the training features, the ``least-variant components'', the principal components associated to the smallest singular values of the training features. 
This view motivates a second-order analysis that provides tail bounds about the behavior of orthonormal certificates.

\begin{theorem}
\label{thm:oc}
Let the in-domain data follow $x \sim \mathcal{N}(0, \Sigma)$ with $\Sigma = V\Lambda V^\top$, the out-domain data follow $x' \sim \mathcal{N}(\mu', \Sigma')$ with $\Sigma' = V'\Lambda'V'^\top$, and the certificates $C \in \mathbb{R}^{d \times k}$ be the bottom $k$ eigenvectors of $\Sigma$, with associated eigenvalues $\Gamma$. Then,
\begin{align*}
    P\left(\|C^\top x\|^2 - \mathbb{E}[\|C^\top x\|^2] \geq t \right) &\leq e^{-t^2/(2 \max_j\Gamma_j)},\\
    P\left(\|C^\top x'\|^2 - \mathbb{E}[\|C^\top x'\|^2]\geq t \right) &\leq e^{-t^2/(2\max_j \Lambda'_j \| C_jV'_j \|^2)}.
\end{align*}
\end{theorem}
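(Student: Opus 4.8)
The plan is to derive both tail bounds from the Gaussian concentration of measure for Lipschitz functions (the Borell--Tsirelson--Ibragimov--Sudakov inequality): if $z \sim \mathcal{N}(0, I_d)$ and $g$ is $L$-Lipschitz, then $P(g(z) - \mathbb{E}[g(z)] \geq t) \leq e^{-t^2/(2L^2)}$. Both exponents in the statement have exactly the form $2L^2$, which signals that the entire theorem reduces to computing the correct effective Lipschitz constant $L$ in each of the two Gaussian geometries. I would first record the one structural fact that makes this clean: because $C$ has orthonormal columns ($C^\top C = I_k$), the map $g_0(x) = \|C^\top x\|$ has gradient $\nabla g_0(x) = C C^\top x / \|C^\top x\|$, whose Euclidean norm equals $1$ (since $\|C C^\top x\|^2 = x^\top C C^\top x = \|C^\top x\|^2$), so $g_0$ is $1$-Lipschitz in the standard metric.

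For the in-domain bound I would reparametrize $x = \Sigma^{1/2} z$ with $z \sim \mathcal{N}(0, I_d)$ and study $g(z) = g_0(\Sigma^{1/2} z)$. By the chain rule its effective Lipschitz constant is controlled by $\|\Sigma^{1/2} C u\|$ with $u = C^\top x$: one has $\|\Sigma^{1/2} C u\|^2 = u^\top C^\top \Sigma C u$. Here the hypothesis does the work: the columns of $C$ are the bottom $k$ eigenvectors of $\Sigma$ with eigenvalues $\Gamma$, so $C^\top \Sigma C = \mathrm{diag}(\Gamma)$ and $u^\top C^\top \Sigma C u \leq \max_j \Gamma_j \, \|u\|^2$. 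Hence $L^2 = \max_j \Gamma_j$, and Borell--TIS yields the first inequality immediately.

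For the out-domain bound I would proceed identically with $x' = \mu' + (\Sigma')^{1/2} z$; the additive mean $\mu'$ is a translation and does not affect any Lipschitz constant, so it drops out. The difference is that $C$ no longer diagonalizes the covariance, so the effective constant is now governed by $\|C^\top \Sigma' C\|_{\mathrm{op}}$. Substituting the eigendecomposition $\Sigma' = \sum_j \Lambda'_j V'_j (V'_j)^\top$ gives $C^\top \Sigma' C = \sum_j \Lambda'_j (C^\top V'_j)(C^\top V'_j)^\top$, a sum of rank-one terms, and bounding its quadratic form produces the factor $\max_j \Lambda'_j \|C^\top V'_j\|^2$ appearing in the exponent, after which Borell--TIS closes the argument.

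The main obstacle is precisely this last step: unlike the in-domain case, $C^\top \Sigma' C$ is not diagonal, so one must control the operator norm of a genuine mixture of rank-one matrices and match it to the claimed quantity $\max_j \Lambda'_j \|C_j V'_j\|^2$; bounding the cross-terms tightly, rather than by a loose sum over $j$, is the delicate part. A secondary point to handle carefully is that the Borell--TIS argument concentrates the norm $\|C^\top x\|$ rather than its square, so I would either phrase the conclusion for the norm and transfer it to $\|C^\top x\|^2$ through the monotone substitution $t \mapsto \sqrt{t}$, or else verify that the stated centering at $\mathbb{E}[\|C^\top x\|^2]$ is the intended reference point and reconcile the exponent accordingly.
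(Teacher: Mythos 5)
Your proposal follows the same route as the paper's proof: reduce to a standard Gaussian vector and apply Gaussian (Borell--TIS) concentration for Lipschitz functions, with the exponent read as $2L^2$. Your in-domain computation is sound and matches the paper's intent: since $C$ diagonalizes $\Sigma$, one has $\|C^\top\Sigma^{1/2}\|_{\mathrm{op}}^2 = \|C^\top\Sigma C\|_{\mathrm{op}} = \max_j\Gamma_j$.

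However, the two obstacles you flag at the end are genuine, and you should know that the paper's proof does not resolve them either---it asserts its way past them. First, the paper applies the Lipschitz concentration inequality directly to the \emph{squared} norm, $f(Y)=\|\sqrt{\Gamma}Y\|^2$, claiming it ``has a Lipschitz constant of $\max_j\Gamma_j$''. A quadratic function on $\mathbb{R}^k$ is not globally Lipschitz, so that step is invalid as written; what is true is your statement about the norm $\|C^\top x\|$ (with $L=\sqrt{\max_j\Gamma_j}$), and your proposed transfer $t\mapsto\sqrt{t}$ does not recover the stated inequality, because the centering constants differ ($\mathbb{E}[\|C^\top x\|^2]\geq(\mathbb{E}\|C^\top x\|)^2$ by Jensen) and the substitution changes the shape of the exponent; the honest bound for the squared norm is of Hanson--Wright (sub-exponential) type, not purely sub-Gaussian. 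Second, the delicate out-domain step you identify cannot in fact be carried out: $\|C^\top\Sigma' C\|_{\mathrm{op}}$ is \emph{not} bounded by $\max_j\Lambda'_j\|C^\top V'_j\|^2$ in general. Take $k=1$, $C=(V'_1+V'_2)/\sqrt{2}$, $\Lambda'_1=\Lambda'_2=1$: then $C^\top\Sigma'C = 1$ while $\max_j\Lambda'_j\|C^\top V'_j\|^2=1/2$, so the cross-terms genuinely spoil the claimed constant; the paper again simply declares this quantity to be ``the Lipschitz constant'' without argument. In short, your proposal reproduces the paper's argument exactly where that argument is sound, and the places where you stall are precisely the places where the paper's own proof (and, for the second bound, arguably the theorem statement itself) is deficient.
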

\begin{proof}
By the Gaussian affine transform, we have that $C^\top x = \mathcal{N}(0, C\Sigma C^\top)$.
To better understand the covariance matrix, note $C \Sigma = C \Gamma = \Gamma C \Rightarrow C \Sigma C^\top = \Gamma C C^\top = \Gamma$.
The equalities follow by the eigenvalue definition, the commutativity of diagonal matrices, and the orthogonality of eigenvectors for real symmetric matrices.
Then, for in-domain data, we can write $C^\top x = \mathcal{N}(0, \Gamma)$.

Next, recall the Gaussian concentration inequality for $L$-Lipschitz functions $f : \mathbb{R}^k \to \mathbb{R}$, applied to a standard Normal vector $Y = (Y_1, \ldots, Y_k)$, which is $P(f(Y) - \mathbb{E}[f(Y)] \geq t) \leq e^{-t^2/(2L)}$ for all $t > 0$.
In our case, our certificate vector is distributed as $\sqrt{\Gamma} Y$, where $Y$ is a standard Normal vector. This means that the function $f(Y) = \| \sqrt{\Gamma} Y \|^2$ has a Lipschitz constant of $\max_j \Gamma_j$ with $j \in 1.. k$. Applying the Gaussian concentration inequality to this function leads to the first tail bound.

A similar line of reasoning follows for out-domain data, where $C^\top x' = \mathcal{N}(C\mu', C\Sigma' C^\top)$.
The function $f$ applied to a standard Normal vector $Y$ is in this case $f(Y) = \| CC'\sqrt{\Lambda'}Y + C\mu' \|^2$.
The Lipschitz constant of this function is $\max_j \Lambda'_j \| C_jC'_j \|^2$. Applying the Gaussian concentration inequality to this function leads to the second result.
\end{proof}
The previous theorem highlights two facts.
On the one hand, by estimating the average in-domain certificate response $\mathbb{E}[\|C^\top x\|^2]$ using a validation sample, the first tail bound characterizes the decay of epistemic uncertainty.
On the other hand, the second tail bound characterizes the set of out-domain distributions that we will be able to discriminate against.
In particular, these are out-domain distributions where $\|C V'\|$ is small, or associated to a small eigenvalue in $\Lambda'$.
That is, certificates will be able to distinguish in-domain and out-domain samples as long as these are drawn from two distributions with a sufficiently different null-space. 

\paragraph{Combined uncertainty}
While the idea of a combined measure of uncertainty is appealing, aleatoric and epistemic unknowns measure different quantities in different units; different applications will highlight the importance of one over the other \citep{lobato_decomposition, kendall2017uncertainties}.
We leave for future work the study of combination of different types of uncertainty. 

\section{Experiments}
\label{sec:experiments}

We apply SQR \eqref{eq:aleatoric} to estimate aleatoric uncertainty in Section~\ref{exp:aleatoric}, and OCs \eqref{eq:epistemic} to estimate epistemic uncertainty in Section~\ref{exp:epistemic}.
For applications of SQR to causal discovery, see Appendix~\ref{app:causality}.
{Our code is available at \url{https://github.com/facebookresearch/SingleModelUncertainty}.}

\subsection{Aleatoric Uncertainty: Prediction Intervals}
\label{exp:aleatoric}

\begin{table}
\caption{Evaluation of $95\%$ prediction intervals. 
    We show the test average and standard deviation PICP of models achieving a validation PICP in $[0.925, 0.975]$.
    In parenthesis, we show the test average and standard deviation MPIW associated to those models.
    We show ``none'', when the method could not find a model with the desired PICP bounds.}
\label{tab:aleatoric}
\vskip 0.25cm
\centering
\resizebox{0.9\textwidth}{!}{
\begin{tabular}{lccc}
\toprule
                               & ConditionalGaussian              & Dropout                          & GradientBoostingQR               \\
\midrule
concrete                       & $0.94 \pm 0.03$ $(0.32 \pm0.09)$ & none                             & $0.93 \pm 0.00$ $(0.71 \pm0.00)$ \\
power                          & $0.94 \pm 0.01$ $(0.18 \pm0.00)$ & $0.94 \pm 0.00$ $(0.37 \pm0.00)$ & none                             \\
wine                           & $0.94 \pm 0.02$ $(0.49 \pm0.03)$ & none                             & none                             \\
yacht                          & $0.93 \pm 0.06$ $(0.03 \pm0.01)$ & $0.97 \pm 0.03$ $(0.10 \pm0.01)$ & $0.95 \pm 0.02$ $(0.79 \pm0.01)$ \\
naval                          & $0.96 \pm 0.01$ $(0.15 \pm0.25)$ & $0.96 \pm 0.01$ $(0.23 \pm0.00)$ & none                             \\
energy                         & $0.94 \pm 0.03$ $(0.12 \pm0.18)$ & $0.91 \pm 0.04$ $(0.17 \pm0.01)$ & none                             \\
boston                         & $0.94 \pm 0.03$ $(0.55 \pm0.20)$ & none                             & $0.89 \pm 0.00$ $(0.75 \pm0.00)$ \\
kin8nm                         & $0.93 \pm 0.01$ $(0.20 \pm0.01)$ & none                             & none                             \\
\bottomrule
\end{tabular}
}
\vskip 0.25cm
\resizebox{0.9\textwidth}{!}{
\begin{tabular}{lccc}
\toprule
          & QualityDriven                    & QuantileForest                   & \textbf{SQR (ours)} \\
\midrule                                                                                                           
concrete  & none                             & $0.96 \pm 0.01$ $(0.37 \pm0.02)$ & $0.94 \pm 0.03$ $(0.31 \pm0.06)$ \\
power     & $0.93 \pm 0.02$ $(0.34 \pm0.19)$ & $0.94 \pm 0.01$ $(0.18 \pm0.00)$ & $0.93 \pm 0.01$ $(0.18 \pm0.01)$ \\
wine      & none                             & none                             & $0.93 \pm 0.03$ $(0.45 \pm0.04)$ \\
yacht     & $0.92 \pm 0.05$ $(0.04 \pm0.01)$ & $0.97 \pm 0.04$ $(0.28 \pm0.11)$ & $0.93 \pm 0.06$ $(0.06 \pm0.04)$ \\
naval     & $0.94 \pm 0.02$ $(0.21 \pm0.11)$ & $0.92 \pm 0.01$ $(0.22 \pm0.00)$ & $0.95 \pm 0.02$ $(0.12 \pm0.09)$ \\
energy    & $0.91 \pm 0.04$ $(0.10 \pm0.05)$ & $0.95 \pm 0.02$ $(0.15 \pm0.01)$ & $0.94 \pm 0.03$ $(0.08 \pm0.03)$ \\
boston    & none                             & $0.95 \pm 0.03$ $(0.37 \pm0.02)$ & $0.92 \pm 0.06$ $(0.36 \pm0.09)$ \\
kin8nm    & $0.96 \pm 0.00$ $(0.84 \pm0.00)$ & none                             & $0.93 \pm 0.01$ $(0.23 \pm0.02)$ \\
\bottomrule
\end{tabular}
}
\end{table}

We evaluate SQR~\eqref{eq:aleatoric} to construct $(1-\alpha)$ Prediction Intervals (PIs) on eight UCI datasets \citep{UCI}.
These are intervals containing the true value about some target variable, given the values for some input variable, with at least $(1-\alpha)\%$ probability.
The quality of prediction intervals is measured by two competing objectives:
\begin{itemize}
    \item Prediction Interval Coverage Probability (PICP), that is, the number of true observations falling inside the estimated prediction interval; 
    \item Mean Prediction Interval Width (MPIW), that is, the average width of the prediction intervals. 
\end{itemize}
We are interested in calibrated prediction intervals (PICP $= 1 - \alpha$) that are narrow (in terms of MPIW). 
For sensitive applications, having well calibrated predictive intervals is a priority.

We compare SQR to five popular alternatives.
\emph{ConditionalGaussian} \citep{lakshminarayanan2017simple} fits a conditional Gaussian distribution and uses its variance to compute prediction intervals.
\emph{Dropout} \citep{gal2016dropout} uses dropout \citep{hinton2012improving} at testing time to obtain multiple predictions, using their empirical quantiles as prediction intervals. 
\emph{QualityDriven} \citep{Pearce2018} is a state-of-the art deep model to estimate prediction intervals, that minimizes a smooth surrogate of the PICP/MPIW metrics.
\emph{GradientBoosting} and \emph{QuantileForest} \citep{meinshausen2006quantile} are ensembles of decision trees performing quantile regression.

We use the same neural network architecture for the first three methods, and cross-validate the learning rate and weight decay parameters for the Adam optimizer \citep{kingma2014adam}.
For the tree-based of the methods, we cross-validate the number of trees and the minimum number of examples to make a split.
We repeat all experiments across $20$ random seeds.
See Appendix~\ref{app:grid} and code for details.

Table~\ref{tab:aleatoric} summarizes our experiment.
This table shows \emph{test} average and standard deviation PICP of those models achieving a \emph{validation} PICP in $[0.925, 0.975]$.
These are the models with reasonably well-calibrated prediction intervals; in particular, PICPs closer to $0.95$ are best.
Among those models, in parenthesis, we show their \emph{test} average and standard deviation MPIW; here, smallest MPIWs are best.
We show ``none'' for methods that could not find a single model with the desired PICP bounds in a given dataset.
Overall, our method SQR is able to find the narrowest well-calibrated prediction intervals.
The only other method able to find well-calibrated prediction intervals for all dataset is the simple but strong baseline \emph{ConditionalGaussian}.
However, our SQR model estimates a non-parametric conditional distribution for the target variable, which can be helpful in subsequent tasks.
We obtained similar results for $90\%$ and $99\%$ prediction intervals.

\subsection{Epistemic Uncertainty: Out-of-Distribution Detection}
\label{exp:epistemic}

We evaluate the ability of Orthonormal Certificates, OCs \eqref{eq:epistemic}, to estimate epistemic uncertainty in the task of out-of-distribution example detection.
We consider four classification datasets with ten classes:
MNIST, CIFAR-10, Fashion-MNIST, and SVHN.
We split each of these datasets at random into five ``in-domain'' classes and five ``out-of-domain'' classes.
Then, we train PreActResNet18 \citep{he2016identity, preact} and VGG \citep{simonyan2014very} models on the training split of the in-domain classes.
Finally, we use a measure of epistemic uncertainty on top of the last layer features to distinguish between the testing split of the in-domain classes and the testing split of the out-of-domain classes.
These two splits are roughly the same size in all datasets, so we use the ROC AUC to measure the performance of different epistemic uncertainty estimates at the task of distinguishing in- versus out-of- test instances.

We note that our experimental setup is much more challenging than the usual considered in one-class classification (where one is interested in a single in-domain class) or out-of-distribution literature (where the in- and out-of- domain classes often belong to different datasets).

We compare orthonormal certificates \eqref{eq:epistemic} to a variety of well known uncertainty estimates.
These include Bayesian linear regression (covariance), distance to nearest training points (distance), largest softmax score \citep[largest]{hendrycks2016baseline}, absolute difference between the two largest softmax scores (functional), softmax entropy (entropy), geometrical margin \citep[geometrical]{wang2014new}, ODIN \citep{odin}, random network distillation \citep[distillation]{burda2018exploration}, principal component analysis (PCA), Deep Support Vector Data Description \citep[SVDD]{ruff2018deep}, BALD \citep{houlsby2011bayesian}, a random baseline (random), and an oracle trained to separate the in- and out-of- domain examples.
From all these methods, BALD is a Dropout-based ensemble method, that requires training the entire neural network from scratch \citep{gal2017deep}.

\autoref{tab:epistemic} shows the ROC AUC mean and standard deviations (across hyper-parameter configurations) for all methods and datasets.
In these experiments, the variance across hyper-parameter configurations is specially important, since we lack of out-of-distribution examples during training and validation.
Simple methods such as functional, largest, and entropy show non-trivial performances at the task of detecting out-of-distribution examples.
This is inline with the results of \citep{hendrycks2016baseline}.
OCs achieve the overall best average accuracy (90\%, followed by the entropy method at 87\%), with little or null variance across hyper-parameters (the results are stable for any regularization strength $\lambda > 1$).
OCs are also the best method to reject samples across datasets, able to use an MNIST network to reject 96\% of Fashion-MNIST examples (followed by ODIN at 85\%).

\begin{table}[t]
\centering
\caption{ROC AUC means and standard deviations of out-of-distribution detection experiments. All methods except BALD are single-model and work on top of the last layer of a previously trained network. Each BALD requires ensembling predictions and training the neural network from scratch.}
\vskip 0.25cm
\resizebox{0.8\textwidth}{!}{
\begin{tabular}{lrrrrr}
\toprule
 & cifar & fashion & mnist & svhn\\
\midrule
covariance   & $0.64 \pm 0.00$ & $0.71 \pm 0.13$ & $0.81 \pm 0.00$ & $0.56 \pm 0.00$ \\
distance     & $0.60 \pm 0.11$ & $0.73 \pm 0.10$ & $0.74 \pm 0.10$ & $0.64 \pm 0.13$ \\
distillation & $0.53 \pm 0.01$ & $0.62 \pm 0.03$ & $0.71 \pm 0.05$ & $0.56 \pm 0.03$ \\
entropy      & $0.80 \pm 0.01$ & $0.86 \pm 0.01$ & $0.91 \pm 0.01$ & $0.93 \pm 0.01$ \\
functional   & $0.79 \pm 0.00$ & $0.87 \pm 0.02$ & $0.92 \pm 0.01$ & $0.92 \pm 0.00$ \\
geometrical  & $0.70 \pm 0.11$ & $0.66 \pm 0.07$ & $0.75 \pm 0.10$ & $0.77 \pm 0.13$ \\
largest      & $0.78 \pm 0.02$ & $0.85 \pm 0.02$ & $0.89 \pm 0.01$ & $0.93 \pm 0.01$ \\
ODIN         & $0.74 \pm 0.09$ & $0.84 \pm 0.00$ & $0.89 \pm 0.00$ & $0.88 \pm 0.08$ \\
PCA          & $0.60 \pm 0.09$ & $0.57 \pm 0.07$ & $0.64 \pm 0.06$ & $0.55 \pm 0.03$ \\
random       & $0.50 \pm 0.00$ & $0.51 \pm 0.00$ & $0.51 \pm 0.00$ & $0.50 \pm 0.00$ \\
SVDD         & $0.52 \pm 0.01$ & $0.54 \pm 0.03$ & $0.59 \pm 0.03$ & $0.51 \pm 0.01$ \\
BALD$^\dagger$ & $0.80 \pm 0.04$ & $\mathbf{0.95 \pm 0.02}$ & $\mathbf{0.95 \pm 0.02}$ & $0.90 \pm 0.01$ \\
\midrule
\textbf{OCs (ours)} & $\mathbf{0.83 \pm 0.01}$ & $0.92 \pm 0.00$ & $\mathbf{0.95 \pm 0.00}$ & $\mathbf{0.91 \pm 0.00}$ \\
unregularized OCs & $0.78 \pm 0.00$ & $0.87 \pm 0.00$ & $0.91 \pm 0.00$ & $0.88 \pm 0.00$ \\
\midrule
oracle       & $0.94 \pm 0.00$ & $1.00 \pm 0.00$ & $1.00 \pm 0.00$ & $0.99 \pm 0.00$ \\
\bottomrule
\end{tabular}
}
\label{tab:epistemic}
\end{table}

\section{Related Work}
\label{sec:related}

\paragraph{Aleatoric uncertainty} Capturing aleatoric uncertainty is learning about the conditional distribution of a target variable given values of a input variable.
One classical strategy to achieve this goal is to assume that such conditional distribution is Gaussian at all input locations.
Then, one can dedicate one output of the neural network to estimate the conditional variance of the target via maximum likelihood estimation \citep{nix1994estimating, kendall2017uncertainties, lakshminarayanan2017simple}.
While simple, this strategy is restricted to model Gaussian aleatoric uncertainties, which are symmetric and unimodal.
These methods can be understood as the neural network analogues of the aleatoric uncertainty estimates provided by Gaussian processes \citep{rasmussen2004gaussian}. 

A second strategy, implemented by \citep{Pearce2018}, is to use quality metrics for predictive intervals (such as PICP/MPIW) as a learning objective.
This strategy leads to well-calibrated prediction intervals.
Other Bayesian methods \citep{hafner2018reliable} predict other uncertainty scalar statistics (such as conditional entropy) to model aleatoric uncertainty.
However, these estimates summarize conditional distributions into scalar values, and are thus unable to distinguish between unimodal and multimodal uncertainty profiles.

In order to capture complex (multimodal, asymmetric) aleatoric uncertainties, a third strategy is to use implicit generative models \citep{mohamed2016learning}.
These are predictors that accept a noise vector as an additional input, to provide multiple predictions at any given location.
These are trained to minimize the divergence between the conditional distribution of their multiple predictions and the one of the available data, based on samples.
The multiple predictions can later be used as an empirical distribution of the aleatoric uncertainty.
Some of these models are conditional generative adversarial networks \citep{mirza2014conditional} and DiscoNets \citep{bouchacourt2016disco}.
However, these models are difficult to train, and suffer from problems such as ``mode collapse'' \citep{goodfellow2016nips}, which would lead to wrong prediction intervals.

A fourth popular type of non-linear quantile regression techniques are based on decision trees.
However, these estimate a separate model per quantile level \citep{takeuchi2006nonparametric, zheng2010boosting}, or require an a-priori finite discretization of the quantile levels \citep{wen2017multi, rodrigues2018beyond, cannon2018non, zhang2018improved}.
The one method able to estimate all non-linear quantiles jointly in this category is Quantile Random Forests \citep{meinshausen2006quantile}, outperformed by SQR.

Also related to SQR, there are few examples on using the pinball loss to train neural networks for quantile regression.
These considered the estimation of individual quantile levels \citep{white1992nonparametric, taylor2000quantile}, or unconditional quantile estimates with no applications to uncertainty estimation \citep{dabney2018implicit, ostrovski2018autoregressive}.

\paragraph{Epistemic uncertainty} Capturing epistemic uncertainty is learning about what regions of the input space are unexplored by the training data. 
As we review in this section, most estimates of epistemic uncertainty are based on measuring the discrepancy between different predictors trained on the same data.
These include the seminal works on bootstrapping \citep{efron1994introduction}, and bagging \citep{breiman1996bagging} from statistics.
Recent neural network methods to estimate epistemic uncertainty follow this principle \citep{lakshminarayanan2017simple}.

Although the previous references follow a frequentist approach, the strategy of ensembling models is a natural fit for Bayesian methods, since these could measure the discrepancy between the (possibly infinitely) many amount of hypotheses contained in a posterior distribution. 
Since exact Bayesian inference is intractable for deep neural networks, recent years have witnessed a big effort in developing approximate alternatives.
First, some works \citep{blundell2015weight, hernandez2015probabilistic} place an independent Gaussian prior for each weight in a neural network, and then learn the means and variances of these Gaussians using backpropgation.
After training, the weight variances can be used to sample diverse networks, used to obtain diverse predictions and the corresponding estimate of epistemic uncertainty.
A second line of work \citep{gal2016dropout, Gal2016Uncertainty, kendall2017uncertainties} employs dropout \citep{hinton2012improving} during the training and evaluation of a neural network as an alternative way to obtain an ensemble of predictions.
Since dropout has been replaced to a large extent by batch normalization \citep{ioffe2015batch, he2016identity}, \citet{teye2018bayesian} showed how to use batch normalization to obtain ensembles of predictions from a single neural network.

A second strategy to epistemic uncertainty is to use data as a starting point to construct ``negative examples''.
These negative examples resemble realistic input configurations that would lay outside the data distribution.
Then, a predictor to distinguish between original training points and negative examples may be used to measure epistemic uncertainty.
Examples of these strategy include noise-contrastive estimation \citep{gutmann2010noise}, noise-contrastive priors \citep{hafner2018reliable}, and GANs \citep{goodfellow2014generative}.

In machine learning literature, the estimation of epistemic uncertainty is often motivated in terms of detecting out-of-distribution examples \citep{shafaei2018does}.
However, the often ignored literature on anomaly/outlier detection and one-class classification can also be seen as an effort to estimate epistemic uncertainty \citep{pimentel2014review}.
Even though one-class classification methods are implemented mostly in terms of kernel methods \citep{scholkopf2000support, scholkopf2001estimating}, there are recent extensions to leverage deep neural networks \citep{ruff2018deep}.

Most related to our orthonormal certificates, the method deep Support Vector Data Description \citep[SVDD]{ruff2018deep} also trains a function to map all in-domain examples to a constant value.
However, their evaluation is restricted to the task of one-class classification, and our experiments showcase that their performance is drastically reduced when the in-domain data is more diverse (contains more classes).
Also, our orthonormal certificates do not require learning a deep model end-to-end, but can be applied to the last layer representation of any pre-trained network. 
Finally, we found that our proposed diversity regularizer~\eqref{eq:regularizer} was crucial to obtain a diverse, well-performing set of certificates. 

\section{Conclusion}
\label{sec:conclusion}
Motivated by the importance of quantifying confidence in the predictions of deep models, we proposed simple, yet effective tools to measure aleatoric and epistemic uncertainty in deep learning models.
To capture aleatoric uncertainty, we proposed Simultaneous Quantile Regression (SQR), implemented in terms of minimizing a randomized version of the pinball loss function.
SQR estimators model the entire conditional distribution of a target variable, and provide well-calibrated prediction intervals.
To model epistemic uncertainty, we propose the use of Orthonormal Certificates (OCs), a collection of diverse non-trivial functions that map the training samples to zero.

Further work could be done in a number of directions.
On the one hand, OCs could be used in active learning scenarios or for detecting adversarial samples.
On the other hand, we would like to apply SQR to probabilistic forecasting \citep{gneiting2007probabilistic}, and extreme value theory \citep{de2007extreme}.

We hope that our uncertainty estimates are easy to implement, and provide a solid baseline to increase the robustness and trust of deep models in sensitive domains. 

\clearpage
\newpage
\bibliographystyle{abbrvnat}
\bibliography{paper.bib}

\clearpage
\newpage
\appendix

\section{SQR in causal inference applications}
\label{app:causality}
\subsection{Causal discovery}

\begin{table}[b]
\centering
\caption{Results of causal discovery experiments for all methods (rows) and datasets (columns).}
\vskip 0.25cm
    \begin{tabular}{lcccccc}
    \toprule
    & {AN} & {AN-S} & {LS} & {LS-S} & {MN-U} & T\"ubingen \\
    \midrule
    LINGAM    & 0.00 & 0.04 & 0.07 & 0.03 & 0.00 & 0.42 \\
    GR-AN     & 0.05 & 0.15 & 0.11 & 0.20 & 0.50 & 0.50 \\
    biCAM     & 1.00 & 1.00 & 1.00 & 0.53 & 0.86 & 0.58 \\
    ANM       & 1.00 & 1.00 & 0.62 & 0.09 & 0.03 & 0.59 \\
    GPI       & 0.95 & 0.11 & 0.91 & 0.54 & 0.90 & 0.63 \\
    IGCI-g    & 0.98 & 0.98 & 0.99 & 0.94 & 0.74 & 0.64 \\
    EMD       & 0.36 & 0.33 & 0.60 & 0.42 & 0.83 & 0.68 \\
    IGCI-u    & 0.41 & 0.27 & 0.63 & 0.37 & 0.07 & 0.72 \\
    PNL       & 0.96 & 0.63 & 0.91 & 0.44 & 0.66 & 0.73 \\
    QCCD (m=3)& 1.00 & 0.81 & 1.00 & 0.98 & 0.99 & 0.77 \\
    \midrule
    SQR (m=1) & 0.99 & 0.51 & 0.98 & 0.78 & 0.62 & 0.75 \\
    SQR (m=3) & 0.99 & 0.73 & 1.00 & 0.85 & 1.00 & 0.75 \\
    SQR (m=5) & 0.99 & 0.76 & 1.00 & 0.82 & 1.00 & 0.77 \\
    \bottomrule
    \end{tabular}
\label{tab:causal_acc}
\end{table}

We consider the problem of bivariate causal discovery: given a sample from the joint distribution $P(X, Y)$, determine whether $X$ causes $Y$ ($X \to Y$), or $Y$ causes $X$ ($X \leftarrow Y$).
The problem of causal discovery is one where the shape of the conditional distribution of the effect given the cause plays a major role \citep{Mooij2016}.
Indeed, many causal discovery methods can be understood as preferring the ``simplest'' factorization of the joint distribution (amongst $P(X, Y) = P(Y | X) P(X)$ indicating $X \to Y$ and $P(X, Y) = P(X | Y)P(Y)$ indicating $X \leftarrow Y$) as the causal explanation \citep{Mooij2016}.
Here, simplest is measured in terms of some measure such as the Kolmogorov complexity \citep{janzing2010causal}. 

Recently, \citet{tagasovska2018nonparametric} showed that the pinball loss can be used as a proxy to estimate the Kolmogorov complexity of a probability distribution. 
Thus, we may interpret the regression model with lower overall pinball loss (amongst the one mapping $X$ to $Y$, and the one mapping $Y$ to $X$) as the causal model.

The high accuracy rates in \autoref{tab:causal_acc} show that our Simultaneous Quantile Regression (SQR, Equation~\ref{eq:random_pinball}) achieves competitive performance at the task of detecting the causal relationship between two variables across all datasets.
$m$ represents the number of quantile levels pooled for getting the final causal score.
Our evaluation is performed across multiple datasets, including the well-known real-world benchmark from T\"ubingen \citep{Mooij2016}.
The other datasets are generated according to \citep[Section 4]{tagasovska2018nonparametric}, including datasets generated from Additive Noise (AN-), Multiplicative Noise (MN-) and Location-Scale (LS-) models.
We compare SQR against a variety of common algorithms in the bivariate cause-effect literature, including LINGAM \citep{shimizu2006linear}, GR-AN \citep{hernandez2016non}, ANM \citep{hoyer2009nonlinear}, PNL \citep{zhang2009identifiability}, IGCI \citep{daniusis2012inferring}, GPI \citep{stegle2010probabilistic}, and EMD \citep{chen2014causal}.
The same table shows the importance of aggregating the pinball loss across multiple quantile levels ($m = 1, 3, 5$) when determining the causal direction between two variables using our proposed SQR.
Although using SQR for causal discovery is inspired by \citet{tagasovska2018nonparametric}, their method QCCD is based on non-parametric copulas which are difficult to scale to large number of samples or dimensions.
Moreover, SQR is learned end-to-end and is more scalable.

\subsection{Heterogeneous treatment effects}

\begin{figure}[ht]
\centerline{\includegraphics[width=0.6\linewidth]{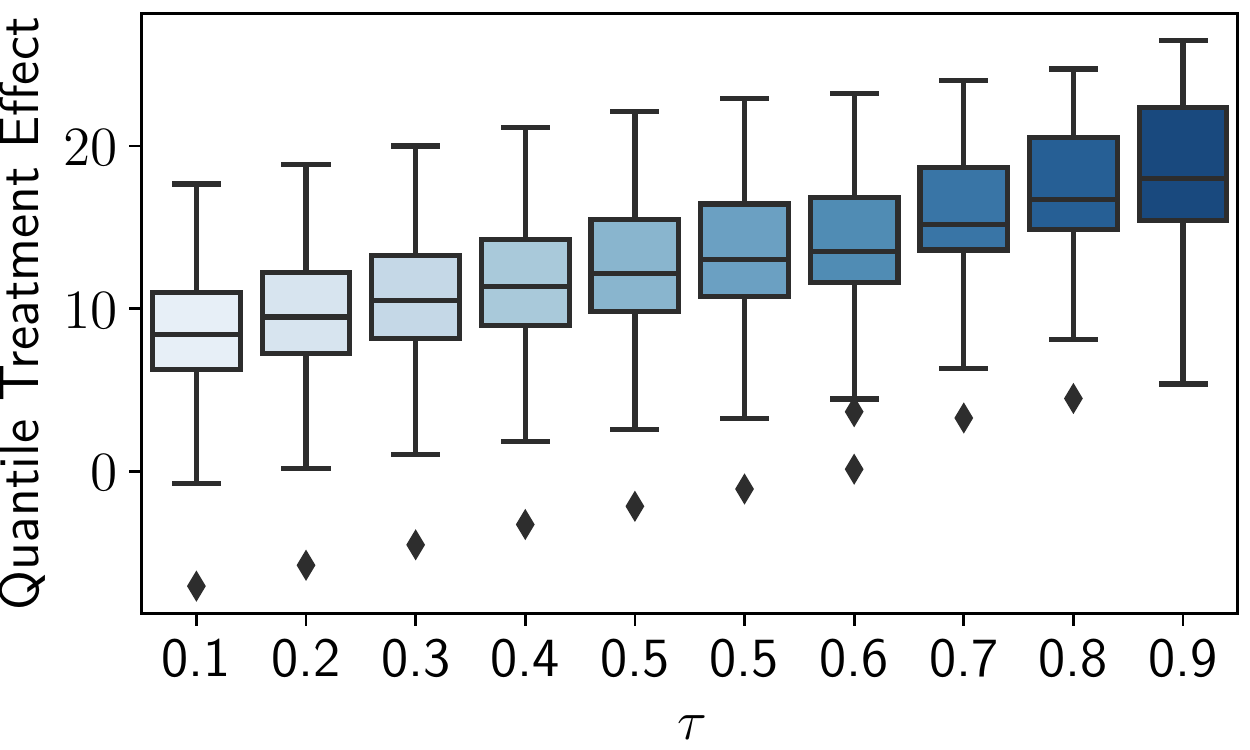}}
\caption{Results on Heterogeneous quantile treatment effect of ``class size'' over ``exam scores''.}
\label{fig:het_qte}
\end{figure}

Here we apply simultaneous quantile regression to characterize the conditional quantile distribution $F^{-1}(E | T, X)$ of an effect variable $E$ given some treatment variable $T$ and individual covariates $X$, also known as Quantile Treatment Effect (QTE) estimation \citep{abadie1998instrumental}.

\autoref{fig:het_qte} showcases such application for the Student-Teacher Achievement Ratio dataset \citep{10766_2008}, where we estimate the effect of ``class size'' (treatment) over the ``exam scores'' (effect) of students, while considering additional individual features from both the students and the teachers (individual covariates).
As independently confirmed by other studies \citep{qu2015nonparametric}, we can also notice the heterogeneity in the treatment effect, namely that the variable ``class size'' has a larger causal effect for students associated to a larger ``exam scores'' variable.
This shows that uncertainty quantification has a role in policy design.

\section{Quantile classification}
\label{app:classification}
Conditional Quantile regression can be recasted to classification method when the target variable is ordinal. 
In the case of $Y \in \lbrace { 1, 2 ... k \rbrace }$, one still has the CDF is fully specified by $p(F_Y (y)) = P(Y \leq y | X = x)$, where
$y \in {k : P(Y = k) > 0}$.
We next exemplify the application of pinball loss in the case of binary classification for the age of Abalone (UCI dataset).

The Abalone dataset consists of 4177 observations, with 8 features, and we clean the data from missing records.
The goal is predicting the age of abalone (number of \verb|Rings|) from some physical measurements.
The \verb|Rings| variable was split into two classes:  (0 Young) $\text{Age} = 0 \,|\, \text{Rings} \leq 10$ and (1 Old) $\text{Age} = 1 \,|\, \text{Rings} > 10$.
We use the median (in this case also corresponding to $Rings=10$).
To evaluate the results we compare in terms of classification accuracy and ROC AUC against same network architecture and parameters (2 layers with 100 units each, $\text{lr} = \text{wd} = 1e-3$) trained with Binary Cross Entropy loss.
We repeated the experiment 30 times and the results are as follows: 
\begin{itemize}
\item Accuracy: Pinball $0.78 \pm 0.02$, BCE $0.79 \pm 0.02$
\item ROC AUC: Pinball $0.84 \pm 0.03$,  BCE $0.87 \pm 0.02$
\end{itemize}
The results from Pinball loss being within the margin of those from logistic loss, confirm that our method can also be used for classification purposes.
Furthermore, SQR provides confidence intervals about the obtained prediction score.

\section{Hyperparameters for experiments}
\label{app:grid}

\subsection{Prediction Intervals experiment}

For all network-based baselines the results were cross-validated over the following grid:
\begin{itemize}
\item learning rate $\in \lbrace { 1e^{-2}, 1e^{-3}, 1e^{-4} \rbrace }$
\item weight decay $\in \lbrace {0, 1e^{-3}, 1e^{-2}, 1e^{-1}, 1 \rbrace }$
\item dropout rate $\in \lbrace 0.1, 0.25, 0.5, 0.75 \rbrace$,
\item number of epochs $5000$
\item $20$ random seeds
\end{itemize}

For the rest of the methods for the same 20 random seed, the following parameters per baseline were used:
\begin{itemize}
\item GradientBoostingQR: number of learners $ \in \lbrace 100, 1000 \rbrace$  ,
                                                 max depth $\in \lbrace 3, 9, 10 \rbrace $,
                                                learning rate $ \in \lbrace { 1e^{-2}, 1e^{-3}, 1e^{-4} \rbrace} $,
                                                min samples per leaf $  \lbrace 3, 9, 15 \rbrace $,
                                                min samples per split $  \lbrace 3, 9, 15 \rbrace $
\item QuantileForest: number of learners $ \in \lbrace 100, 1000 \rbrace$  ,
								min samples per split $  \lbrace 3, 9, 15 \rbrace $,
								max depth $\in \lbrace 3, 5 \rbrace $
\end{itemize}

\subsection{Out-of-distribution experiment}

The following parameters  per baseline were used:
\begin{itemize}
\item covariance: $\lambda \in \lbrace 1e^{-5}, 1e^{-3}, 1e^{-1}, 1 \rbrace$
\item distance: $\text{dist}_{\text{percentile}}  \in \lbrace 0, 1, 10, 50 \rbrace$
\item certificates: $c_k \in \lbrace 100, 1000 \rbrace$ ,$c_{\text{epochs}} \in \lbrace 10, 100 \rbrace$, $c_\lambda  \in \lbrace 0, 1, 10 \rbrace$
\item distillation  $c_k \in \lbrace 100, 1000 \rbrace$ ,  $c_{\text{epochs}} \in \lbrace 10, 100 \rbrace$
\item entropy $\text{temperature} \in \lbrace 1, 2, 10 \rbrace$
\item functional $logits \in \lbrace 0, 1 \rbrace$, $\text{temperature} \in \lbrace 1, 2, 10 \rbrace$
\item geometrical $logits \in \lbrace 0, 1 \rbrace$, $\text{temperature} \in \lbrace 1, 2, 10 \rbrace$
\item largest $logits \in \lbrace 0, 1 \rbrace$, $\text{temperature} \in \lbrace 1, 2, 10 \rbrace$
\item odin $\epsilon \in \lbrace 0.014, 0.0014, 0.00014 \rbrace$, $\text{temperature} \in \lbrace 100, 1000, 10000 \rbrace$
\item svdd  $c_k \in \lbrace 100, 1000 \rbrace$ ,  $c_{\text{epochs}} \in \lbrace 10, 100 \rbrace$
\item pca  $c_k \in \lbrace 1, 10, 100 \rbrace$
\item bald  $n_{drop} \in \lbrace 100 \rbrace$
\end{itemize}

\end{document}